\journal{Operations Research Letters}
\newtheorem{theorem}{Theorem}
\newtheorem{lemma}{Lemma}
\theoremstyle{definition}
\newtheorem{remark}{Remark}
\newcommand{\ud}{\mathrm d}
\newcommand{\kl}{\mathrm{KL}}
\def\reg{\mathrm{Reg}}
\definecolor{DSgray}{cmyk}{0,1,0,0}
\begin{document}

\begin{frontmatter}



\title{A Note on a Tight Lower Bound for Capacitated MNL-Bandit Assortment Selection Models}


\author[label1]{Xi Chen}
\author[label2]{Yining Wang \corref{cor1}}

\address[label1]{Leonard N. Stern School of Business, New York University. New York NY 10012, USA.}
\address[label2]{School of Computer Science, Carnegie Mellon University. Pittsburgh PA 15213, USA.}
\cortext[cor1]{Correspondence to: Yining Wang. Machine Learning Department, School of Computer Science,
Carnegie Mellon University. Room 8009, Gates-Hilman Complex, 5000 Forbes Ave, Pittsburgh PA 15213, USA. \ead{yiningwa@cs.cmu.edu}}

\begin{abstract}
In this short note we consider a dynamic assortment planning problem under the capacitated multinomial logit (MNL) bandit model.
We prove a tight lower bound on the accumulated regret that matches existing regret upper bounds for all parameters
(time horizon $T$, number of items $N$ and maximum assortment capacity $K$) up to logarithmic factors.
Our results close an $O(\sqrt{K})$ gap between upper and lower regret bounds from existing works.
\end{abstract}

\begin{keyword}
dynamic assortment selection \sep
multinomial logit choice model\sep
regret minimization\sep
information-theoretical lower bound



\end{keyword}

\end{frontmatter}



\section{Introduction}

We consider the question of dynamic assortment planning with an multinomial logit (MNL) choice model and capacity constraints \cite{agrawal2016near,agrawal2016thompson,Barto1995,kok2008assortment,rusmevichientong2012robust}.
In this model, $N$ items are present, each associated with a known revenue parameter $r_i>0$ and an unknown preference parameter $v_i>0$.
For a total of $T$ epochs, at each epoch $t$ a retailer, based on the purchasing history of previous customers, selects an \emph{assortment} $S_t\subseteq[N]$ of size at most $K$ (i.e., $|S_t|\leq K$)
to present to an incoming customer;
{ the constraint $|S_t|\leq K$ on the size of assortments $\{S_t\}$ is referred to as capacity constraints throughout this paper.}
The retailer then observes a purchasing outcome $i_t\in S_t\cup\{0\}$ sampled from the following discrete distribution:
$$
\Pr[i_t=j] = \frac{v_j}{1+\sum_{j'\in S_t}v_{j'}}, \;\;\;\;\;\; v_0=1,
$$
and collects the corresponding revenue $r_{i_t}$ (if $i_t=0$ then no item is purchased and therefore no revenue is collected).
The objective is to find a policy $\pi$ that minimizes the worst-case expected regret
\begin{align*}
&\reg_\pi(N,T,K) := \sup_{v,r}\mathbb E\left[\sum_{t=1}^T{R_v(S_v^*)-R_v(S_t)}\right],\;\;\;\;\text{where}\\
 &R_v(S) := \mathbb E\left[r_i|S\right] = \frac{\sum_{i\in S}r_iv_i}{1+\sum_{i\in S}v_i}.
\end{align*}
Here $R_v(S)$ is the expected revenue collected on assortment $S$ and $S_v^* := \arg\max_{S\subseteq[N]: |S|\leq K}R_v(S)$ is the optimal assortment in hindsight.
{It is also commonly assumed that the revenue parameters $\{r_i\}_{i=1}^N$ are normalized and therefore uniformly bounded, meaning that
$r_i\leq 1$ for all $i\in[N]$.}

It was shown in \citep{agrawal2016near,agrawal2016thompson} that Upper Confidence Band (UCB) or Thompson sampling based policies achieve regret $O(\sqrt{NT}\log TK)$.
Furthermore  \cite{agrawal2016near} shows that no policy can achieve a regret smaller than $\Omega(\sqrt{NT/K})$.
There is an apparent gap between the upper and lower bounds when $K$ is large.

In this note we close this gap by proving the following result:
\begin{theorem}
Suppose $K\leq N/4$. There exists an absolute constant $C\geq 10^{-3}$ independent of $N$, $T$ and $K$ such that for all policy $\pi$,
\begin{equation}
\reg_\pi(N,T,K) \geq C\cdot\min\{ \sqrt{NT}, T\}.
\label{eq:main}
\end{equation}
\label{thm:main}
\end{theorem}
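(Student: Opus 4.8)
The plan is to exhibit one hard family of instances and run an information-theoretic (change-of-measure) argument, with the construction rigged so that the $K$ available slots buy the learner essentially no extra information. I take all revenues equal, $r_i=1$, and a common base preference $v_0\asymp1/K$, so that every feasible assortment has total preference $\sum_{i\in S}v_i\asymp1$ and a no-purchase probability bounded away from $0$ and $1$; this balanced regime keeps each observation informative while making the signal from any single item small. Partition $[N]$ into $K$ blocks of size $m=N/K\ge4$ (using $K\le N/4$); in each block $\ell$ plant one good item $\theta_\ell$ with preference $v_0(1+\delta)$ and leave the rest at $v_0$, with the gap tuned to $\delta\asymp\min\{1,\sqrt{N/T}\}$. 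Because revenues are equal, $R_v$ is increasing in total offered preference, so the optimal assortment is exactly the planted set $A=\{\theta_1,\dots,\theta_K\}$, and all regret comes from failing to offer these $K$ items.

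Two quantitative facts drive the bound. First, a second-order expansion of $R_v(S)=\frac{\sum_{i\in S}v_i}{1+\sum_{i\in S}v_i}$ about the balanced point $\sum_{i\in S}v_i\asymp1$ shows that each planted item the policy omits costs $\asymp\delta/K$ in expected revenue per round, so that $\reg_\pi\gtrsim\frac\delta K\,\E\big[\sum_t|A\setminus S_t|\big]$. Second, and this is the conceptual heart, the per-round Kullback--Leibler divergence between the planted instance and the all-base instance is $\lesssim\delta^2|S_t\cap A|/K$. Even though $S_t$ offers up to $K$ items and returns one of $K+1$ outcomes, the information it reveals about the planted configuration is only $\Theta(\delta^2/K)$ per offered planted item and does \emph{not} grow with $K$; this is exactly the cancellation---made possible by the scaling $v_0\asymp1/K$---that removes the spurious $\sqrt K$.

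The hard part, and where the previous analyses lose a factor of $\sqrt K$, is converting this information budget into regret. Writing $N_j=\sum_t\mathbf 1[j\in S_t]$, a single global change of measure from the planted to the all-base instance bounds the drift of $G=\sum_j\mathbf 1[j\in A]N_j$ by its range $\|G\|_\infty=KT$ times a total-variation distance $\lesssim\delta\sqrt{KT/N}$, which only yields $\Omega(\sqrt{NT/K})$. Instead I would change measure \emph{one block at a time}: for block $\ell$ I compare the planted instance to the reference that resets block $\ell$ to all-base, so only the $N_j$ with $j$ in block $\ell$ move, each of range $T$ with drift $\lesssim T\sqrt{(\delta^2/K)\,\E[N_j]}$ by Pinsker and the key KL bound. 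Averaging over the $m=N/K$ locations of the planted item and applying Cauchy--Schwarz (over the $m$ items, then over the $K$ blocks) controls $\sum_\ell\E[N_{\theta_\ell}]$ by $\tfrac1m\sum_\ell M_\ell+\tfrac{T\delta}{\sqrt{Km}}\sum_\ell\sqrt{M_\ell}$, where $M_\ell$ is the expected number of block-$\ell$ offers. The essential point is that the total offer budget $\sum_\ell M_\ell\le KT$ enters \emph{linearly} in the first term (giving $K^2T/N\le KT/4$ via $K\le N/4$) and only through a square root in the second (giving $\lesssim\delta KT^{3/2}/\sqrt N$); choosing $\delta=\tfrac14\sqrt{N/T}$ makes both at most $KT/4$, so the planted items are offered only $\le KT/2$ times in expectation. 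Hence $\E[\sum_t|A\setminus S_t|]\ge KT/2$ and the regret--mistake link gives $\reg_\pi\gtrsim(\delta/K)\cdot KT/2=\delta T/2\asymp\sqrt{NT}$.

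Two regimes and the constants remain. The argument handles arbitrary assortments automatically, since it uses only the global budget $\sum_\ell M_\ell\le KT$ and the trivial bound $N_{\theta_\ell}\le T$; adaptivity is absorbed by the chain rule, which writes the trajectory KL as the expectation of the sum of the per-round divergences above. When $N>T$ the tuning $\delta=\tfrac14\sqrt{N/T}$ would exceed $1$, so I cap $\delta$ at a constant; the planted items then become even less learnable, are offered a vanishing fraction of the time, and the same chain gives $\reg_\pi\gtrsim\delta T\asymp T$, matching the $\min\{\sqrt{NT},T\}$ form. The hypothesis $K\le N/4$ enters through $K^2T/N\le KT/4$ and guarantees $m=N/K\ge4$ so that locating the planted item in a block is genuinely hard. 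The only residual work is numerical: carrying the absolute constants through the expansion of $R_v$, the chi-square/KL estimate, and the two Cauchy--Schwarz steps to certify a value such as $C\ge10^{-3}$.
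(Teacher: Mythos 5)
Your proposal gets the two conceptual ingredients of the paper exactly right: scaling all preferences as $1/K$ so that perturbing a \emph{single} item by a factor $(1+\delta)$ costs only $O(\delta^2/K)$ in per-offer KL divergence, and changing measure one item at a time (rather than against an all-base instance) so that Pinsker's inequality does not pick up a factor $\sqrt{K}$. Your regret decomposition ($\asymp\delta/K$ per omitted planted item) and the tuning $\delta\asymp\min\{1,\sqrt{N/T}\}$ also match the paper. The difference is the instance family: you plant one good item per block of size $m=N/K$, whereas the paper takes the family of \emph{all} $\binom{N}{K}$ size-$K$ subsets. This difference is not cosmetic; it is where your proof breaks.

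The gap is the budget claim $\sum_\ell M_\ell\le KT$. After you average over the $m$ positions of $\theta_\ell$ inside block $\ell$, the quantity $M_\ell$ is necessarily the expected number of block-$\ell$ offers \emph{under the block-$\ell$ reference measure} $P_{A^{-\ell}}$ (block $\ell$ reset to all-base): the sum over positions $j$ can only collapse to $M_\ell/m$ if the measure does not depend on $j$, and the planted measures $P_{A_{-\ell}\cup\{j\}}$ do. Consequently each $M_\ell$ lives under a \emph{different} probability measure, and the inequality $\sum_\ell M_\ell\le KT$ — which is a per-measure accounting identity ("at most $K$ offers per round") — has no justification across measures. A policy that concentrates offers on whichever block currently looks all-base makes precisely these cross-measure counts large simultaneously, and excluding that behavior requires yet another change-of-measure argument, so it is not free. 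The only bound valid per block is the trivial $M_\ell\le\min\{m,K\}T$, which makes your first term $K\min\{m,K\}T/m$; once $K\gtrsim\sqrt{N}$ (so $m\le K$) this equals $KT$, your bracket is nonpositive, and the argument yields nothing in exactly the regime $\sqrt{N}\lesssim K\le N/4$ that the theorem must cover. Put differently, you face a dilemma your write-up conflates: a common reference (all-base) validates the budget but its KL to the planted instance involves all $K$ planted items, reinstating the $\sqrt{K}$ loss you set out to avoid; a block-wise reference keeps the KL small but destroys the common budget. The paper's construction resolves the dilemma: grouping the pairs (instance $S$, removed item $i$) by the reference $S'=S\setminus\{i\}\in\mathcal{S}_{K-1}$, each reference differs from its instances in one item (small KL) \emph{and} is shared by all $N-K+1$ candidates $i\notin S'$, all compared under the single measure $P_{S'}$, where $\sum_{i\notin S'}\mathbb{E}_{S'}[\widetilde N_i]\le KT$ is legitimate and gives $TK/(N-K+1)\le T/3$. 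Replacing your block family by the $\binom{N}{K}$-subset family, and keeping the rest of your argument, repairs the proof.
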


{
\begin{remark}
When the revenue parameters $\{r_i\}_{i=1}^N$ are uniformly bounded (i.e., $r_i\leq 1$ for all $i$), 
a trivial policy that outputs an arbitrary fixed assortment attains regret $O(T)$, meaning that the $\Omega(\sqrt{NT})$ regret cannot be optimal when $T\ll N$.
In the more common scenario of $T=\Omega(N)$, the $\sqrt{NT}$ term in Eq.~(\ref{eq:main}) dominates, leading to an $\Omega(\sqrt{NT})$ regret lower bound.
\end{remark}
}

Theorem \ref{thm:main} matches the upper bound $O(\sqrt{NT}\log TK)$ for all three parameters $N$ (number of items), $T$ (time horizon) and $K$ (maximum allowed size of assortments),
except for a logarithmic factor of $T$.
The proof technique is similar to the proof of \citep[Theorem 3.5]{bubeck2012regret}.
The major difference is that for the MNL-bandit model with assortment size $K$, a ``neighboring'' subset $S'$ of size $K-1$ rather than the empty set is considered in the calculation of KL-divergence.
This approach reduces an $O(\sqrt{1/K})$ factor in the resulting lower bound, which matches the existing upper bound in \citep{agrawal2016near,agrawal2016thompson} up to poly-logarithmic factors.

We also remark that the ``capacity constraint'' $K\leq N/4$ in Theorem \ref{thm:main} is essential.
Indeed, when no capacity constraint is imposed (i.e., $K=N$) it is known that a regret that grows logarithmically with or even completely independent of the number of items $N$
is possible \cite{Rusmevichientong2010,wang2018near}.
{
In the case of $N/4<K<N$, we \emph{conjecture} that the lower bound in Theorem \ref{thm:main} remains valid
provided that $K/N\to\gamma$ for some constant $\gamma<1/2$, by selecting constants in Eq.~(\ref{eq:T3}) more carefully. 
It is, however, unclear to us how the regret will behave for $\gamma \geq 1/2$ and we leave it as an interesting technical open problem.
We remark that for capacitated problems the $K\leq N/4$ condition is very weak and  could be easily satisfied in practice,
because at each time an incoming customer can only be offered an assortment with much fewer items (as compared to the entire commodity pool).

Finally, there is still a gap of $O(\log T)$ between our Theorem \ref{thm:main} and the regret upper bounds established in \cite{agrawal2016near}.
We leave this as another interesting open question.
}

{
\section{Roadmap of the proof}

In this section we give the roadmap of our proof of Theorem \ref{thm:main},
including the construction of adversarial problem instances and how such adversarial construction is analyzed to prove the regret lower bound in Theorem \ref{thm:main}.

Throughout the proof we set $r_1=\cdots=r_N=1$ and $v_1,\cdots,v_N\in\{1/K,(1+\epsilon)/K\}$ for some parameter $\epsilon\in(0,1/2]$ to be specified later.
For any subset $S\subseteq[N]$, we use $\theta_S$ to indicate the parameterization where $v_i=(1+\epsilon)/K$ if $i\in S$ and $v_i=1/K$ if $i\notin S$.

For the ease of presentation, we further define some notations.
We use $\mathcal S_K$ to denote all subsets of $[N]$ of size $K$;
that is, $S\in\mathcal S_K$ implies $|S|=K$.
Clearly, $|\mathcal S_K| = \binom{N}{K}$.
We use $P_S$ and $\mathbb E_S$ to denote the law and expectation under the parameterization $\theta_S$.

The first step in our proof is to show that under problem parameter $\theta_{S_0}$ for some fixed $S_0 \in \mathcal{S}_K$, any assortment selection $\widetilde S_t\in\mathcal S_K$ that differs significantly from $S_0$
would incur a large one-stage regret. 
This is formalized in Lemma \ref{lem:single-regret}, which shows that, if a $\delta$ portion of items differ between $S_0$ and $\widetilde S_t$ then the assortment $\widetilde S_t$
incurs a one-stage regret of $\Omega(\delta\epsilon)$.
This reduces the problem of lower bounding the regret of any policy to lower bounding the (expected) number of times a specific item $i\in[N]$ is offered,
denoted as $\widetilde N_i$ in our proof.

At the second step we show, through a ``neighboring argument'' detailed in Eq.~(\ref{eq:neighbor}),
the question of bounding $\mathbb E[\widetilde N_i]$ can be reduced to upper bounding the discrepancy between $\mathbb E_{S}[\widetilde N_i]$ and $\mathbb E_{S'}[\widetilde N_i]$
under two ``neighboring'' parameterizations $\theta_S$ and $\theta_{S'}$.
Such an upper bound can be established by using the \emph{Pinsker's inequality}, together with an upper bound on the Kullback-Leibler (KL) divergence
between $P_{S}$ and $P_{S'}$, which is stated in Lemma \ref{lem:kl-v}.

Finally, by appropriately setting the parameter $\epsilon$ which scales with $N$, $T$ and $K$ (more specifically, $\epsilon$ is set to $\epsilon=\min\{0.05\sqrt{N/T}, 0.5\}$), we complete the proof of Theorem \ref{thm:main},

}

\section{Proof of Theorem \ref{thm:main}}


\subsection{The counting argument}


{We first prove the following lemma that bounds the regret of any assortment selection $\widetilde S_t\in \mathcal S_K$:} 

\begin{lemma}
Fix arbitrary $S_0\in\mathcal S_K$ and let $v$ be the parameter associated with $\theta_{S_0}$;
that is, $v_i=(1+\epsilon)/K$ for $i\in S_0$ and $v_i=1/K$ for $i\in [N]\backslash S_0$,
{where $\epsilon\in(0,1/2]$.}
For any $\widetilde S_t\in\mathcal S_K$,
it holds that
$$
\max_{S\in\mathcal S_K} \left\{R_v(S)\right\} - R_v(\widetilde S_t) \geq \frac{\delta\epsilon}{9},
$$
{where $\delta = 1 - (|\widetilde S_t\cap S_0|/K)$.}
\label{lem:single-regret}
\end{lemma}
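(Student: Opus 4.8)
The plan is to directly compute (or tightly estimate) both $\max_{S\in\mathcal S_K}R_v(S)$ and $R_v(\widetilde S_t)$ under the fixed parameterization $\theta_{S_0}$, and show their gap is at least $\delta\epsilon/9$. First I would identify the optimal assortment. Since all revenues equal $1$ and the preference parameters are either $(1+\epsilon)/K$ (on $S_0$) or $1/K$ (off $S_0$), the revenue function $R_v(S)=\frac{\sum_{i\in S}v_i}{1+\sum_{i\in S}v_i}$ is monotonically increasing in $\sum_{i\in S}v_i$ (because $x\mapsto x/(1+x)$ is increasing). Hence, among size-$K$ assortments, the optimum simply picks the $K$ items with the largest $v_i$, which is exactly $S_0$ itself. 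Thus $\max_{S\in\mathcal S_K}R_v(S)=R_v(S_0)$, and the total preference mass on $S_0$ is $V_0:=\sum_{i\in S_0}v_i=K\cdot(1+\epsilon)/K=1+\epsilon$.

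Next I would compute the preference mass of $\widetilde S_t$. Writing $m:=|\widetilde S_t\cap S_0|=(1-\delta)K$, the assortment $\widetilde S_t$ contains $m$ items of weight $(1+\epsilon)/K$ and $K-m=\delta K$ items of weight $1/K$, so its total mass is $\widetilde V:=\frac{m(1+\epsilon)+(K-m)}{K}=1+\frac{m\epsilon}{K}=1+(1-\delta)\epsilon=V_0-\delta\epsilon$. The one-stage regret is therefore
\begin{equation*}
R_v(S_0)-R_v(\widetilde S_t)=\frac{V_0}{1+V_0}-\frac{\widetilde V}{1+\widetilde V}=\frac{V_0-\widetilde V}{(1+V_0)(1+\widetilde V)}=\frac{\delta\epsilon}{(1+V_0)(1+\widetilde V)}.
\end{equation*}
The remaining task is purely to lower-bound this expression, i.e.\ to upper-bound the denominator $(1+V_0)(1+\widetilde V)$ by $9$. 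Since $\epsilon\in(0,1/2]$ and $\delta\in[0,1]$, both $V_0=1+\epsilon$ and $\widetilde V=1+(1-\delta)\epsilon$ lie in the interval $[1,3/2]$, so $1+V_0$ and $1+\widetilde V$ each lie in $[2,5/2]$. This gives $(1+V_0)(1+\widetilde V)\leq (5/2)^2=25/4<9$, which yields the claimed bound $R_v(S_0)-R_v(\widetilde S_t)\geq\delta\epsilon/9$.

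I do not anticipate a serious obstacle here: the argument is essentially a monotonicity observation plus an exact algebraic identity for the regret, followed by a crude boundedness estimate on the denominator. The only point requiring a little care is confirming that $S_0$ is genuinely the \emph{global} size-$K$ maximizer (rather than merely a local one), but this is immediate from the monotonicity of $x\mapsto x/(1+x)$ together with the fact that $S_0$ collects all the high-weight items. The constant $9$ is deliberately loose—any bound exceeding $25/4$ would do—so I would simply verify the worst-case denominator rather than optimize the constant.
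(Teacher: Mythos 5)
Your proposal is correct and follows essentially the same route as the paper's own proof: both identify $S_0$ as the maximizer, compute $R_v(S_0)=(1+\epsilon)/(2+\epsilon)$ and $R_v(\widetilde S_t)=(1+(1-\delta)\epsilon)/(2+(1-\delta)\epsilon)$, and bound the resulting gap $\delta\epsilon/\bigl((2+\epsilon)(2+(1-\delta)\epsilon)\bigr)$ below by $\delta\epsilon/9$ using $\epsilon\leq 1/2$. Your only additions are the explicit monotonicity argument for why $S_0$ is optimal and the sharper denominator estimate $25/4<9$, both of which the paper leaves implicit.
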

\begin{proof}
By construction of $v$, it is clear that $\max_{S\in\mathcal S_K}\{R_v(S)\} = R_v(S_0) = ({1+\epsilon})/({2+\epsilon})$.
On the other hand, $R_v(\widetilde S_t) = ({1+(1-\delta)\epsilon})/({2+(1-\delta)\epsilon})$.
Subsequently,
\begin{align*}
\max_{S\in\mathcal S_k} \left\{R_v(S)\right\} - R_v(\widetilde S_t)& = \frac{1+\epsilon}{2+\epsilon} - \frac{1+(1-\delta)\epsilon}{2+(1-\delta)\epsilon}\\
&= \frac{\delta\epsilon}{(2+\epsilon)(2+(1-\delta)\epsilon)} \geq \frac{\delta\epsilon}{9},
\end{align*}
where the last inequality holds because $0<\epsilon\leq 1/2$.
\end{proof}

For each assortment selection $S_t\subseteq[N]$, $|S_t|\leq K$, let $\widetilde S_t \supseteq S_t$ be an arbitrary subset of size $K$ that contains $S_t$;
{ that is, $\widetilde S_t\supseteq S_t$, $\widetilde S_t\subseteq[N]$ and $|\widetilde S_t|=K$.}
For example, when $|S_t|=K$ one may directly set $\widetilde S_t=S_t$.
Define $\widetilde N_i := \sum_{t=1}^T{\mathbb I[i\in\widetilde S_t]}$.
Using Lemma \ref{lem:single-regret} and the fact that $\{\widetilde S_t\}_{t=1}^T$ suffers less regret than $\{S_t\}_{t=1}^T$, we have
\begin{align}
\max_{S\in\mathcal S_K}&\mathbb E_{S}\left[\sum_{t=1}^T{{R_v(S)}-R_v(S_t)}\right]
\geq \max_{S\in\mathcal S_K}\mathbb E_{S}\left[\sum_{t=1}^T{{R_v(S)}-R_v(\widetilde S_t)}\right]\nonumber\\
&\geq \frac{1}{|\mathcal S_K|}\sum_{S\in\mathcal S_K}\mathbb E_{S}\left[\sum_{t=1}^T{{R_v(S)}-R_v(\widetilde S_t)}\right]\label{eq:lfp}\\
&\geq \frac{1}{|\mathcal S_K|}\sum_{S\in\mathcal S_K}\sum_{i\notin S}{\mathbb E_S[\widetilde N_i]\cdot \frac{\epsilon}{9K}}\label{eq:lfp-apply}\\
&{=} \frac{\epsilon}{9}\left(T - \frac{1}{|\mathcal S_K|}\sum_{S\in\mathcal S_K}\frac{1}{K}\sum_{i\in S}\mathbb E_S[\widetilde N_i]\right).\label{eq:counting}
\end{align}
Here Eq.~(\ref{eq:lfp}) holds because the maximum regret is always lower bounded by the average regret (averaging over all parameterization $\theta_S$ for $S\in\mathcal S_K$),
Eq.~(\ref{eq:lfp-apply}) follows from Lemma \ref{lem:single-regret},
and Eq.~(\ref{eq:counting}) holds because $\sum_{i=1}^N{\mathbb E_S[\widetilde N_i]} = \mathbb E_S\left[\sum_{i=1}^N{\widetilde N_i}\right]=TK$ for any $S\subseteq[N]$.
The lower bound proof is then reduced to finding the largest $\epsilon$ such that the summation term in Eq.~(\ref{eq:counting}) is upper bounded by,
say, $cT$ for some constant $c<1$.

\subsection{Pinsker's inequality}

The major challenge of bounding the summation term on the right-hand side of Eq.~(\ref{eq:counting}) is the $\sum_{i\in S}\mathbb E_S[\widetilde N_i]$ term.
Ideally, we expect this term to be small (e.g., around $K/N$ fraction of $\sum_{i=1}^N{\mathbb E_{S}[\widetilde N_i]}=KT$) because $S\in\mathcal S_K$ is of size $K$.
However, a bandit assortment selection algorithm, with knowledge of $S$, could potentially allocate its assortment selections
so that $\widetilde N_i$ becomes significantly larger for $i\in S$ than $i\notin S$.
To overcome such difficulties, we use an analysis similar to the proof of Theorem 3.5 in \citep{bubeck2012regret}
to exploit the $\sum_{i=1}^N{\mathbb E_S[\widetilde N_i]}=NK$ property and Pinsker's inequality \citep{tsybakov2009introduction} to bound the discrepancy in expectations under different parameterization.

Let $\mathcal S_{K-1}^{(i)}=\mathcal S_{K-1}\cap\{S\subseteq[N]: i\notin S\}$ be all subsets of size $K-1$ that do not include $i$.
Re-arranging summation order we have
\begin{align}
\frac{1}{|\mathcal S_K|}\sum_{S\in\mathcal S_K}\frac{1}{K}\sum_{i\in S}\mathbb E_S[\widetilde N_i]
&= \frac{1}{K}\sum_{i=1}^N{\frac{1}{|\mathcal S_K|}\sum_{S\in\mathcal S_K,i\in S}\mathbb E_S[\widetilde N_i]}\nonumber\\
&= \frac{1}{K}\sum_{i=1}^N{\frac{1}{|\mathcal S_K|}\sum_{S'\in\mathcal S_{K-1}^{(i)}}\mathbb E_{S'\cup\{i\}}[\widetilde N_i]}.\label{eq:neighbor}
\end{align}
Denote $P=P_{S'}$ and $Q=P_{S'\cup\{i\}}$. Also note that $0\leq \widetilde N_i\leq T$ almost surely under both $P$ and $Q$.
Using Pinsker's inequality we have that
\begin{align*}
\big| \mathbb E_P[\widetilde N_i]&-\mathbb E_Q[\widetilde N_i]\big|
\leq \sum_{j=0}^T{j\cdot \big|P[\widetilde N_i=j] - Q[\widetilde N_i=j]\big|}\\
&\leq T\cdot \sum_{j=0}^T{\big|P[\widetilde N_i=j]-Q[\widetilde N_i=j]\big|}\\
&\leq T\cdot \|P-Q\|_{\mathrm{TV}} \leq T\cdot \sqrt{\frac{1}{2}\kl(P\|Q)}.
\end{align*}
Here $\|P-Q\|_{\mathrm{TV}}=\sup_{A}|P(A)-Q(A)|$ and $\kl(P\|Q)=\int(\log \ud P/\ud Q)\ud P$ are the total variation and the Kullback-Leibler (KL) divergence
between $P$ and $Q$, respectively.
Subsequently,
\begin{multline}
\frac{1}{|\mathcal S_K|}\sum_{S\in\mathcal S_K}\frac{1}{K}\sum_{i\in S}\mathbb E_S[\widetilde N_i]\\
\leq \frac{1}{K}\sum_{i=1}^N{\frac{1}{|\mathcal S_K|}\sum_{S'\in\mathcal S_{K-1}^{(i)}}\left(\mathbb E_{S'}[\widetilde N_i] + T \sqrt{\frac{1}{2}\kl(P_{S'}\|P_{S'\cup\{i\}})}\right)}.
\label{eq:pinsker}
\end{multline}

The first term on the right-hand side of Eq.~(\ref{eq:pinsker}) is easily bounded:
\begin{align}
\frac{1}{K}\sum_{i=1}^N&{\frac{1}{|\mathcal S_K|}\sum_{S'\in\mathcal S_{K-1}^{(i)}}\mathbb E_{S'}[\widetilde N_i]}
= \frac{1}{|\mathcal S_K|}\sum_{S'\in\mathcal S_{K-1}}{\frac{1}{K}\sum_{i\notin S'}\mathbb E_{S'}[\widetilde N_i]}\nonumber\\
&\leq \frac{1}{|\mathcal S_K|}\sum_{S'\in\mathcal S_{K-1}}{\frac{1}{K}\sum_{i=1}^N{\mathbb E_{S'}[\widetilde N_i]}}\nonumber\\
&= \frac{|\mathcal S_{K-1}|}{K|\mathcal S_K|}\cdot TK = \frac{\binom{N}{K-1}}{K\binom{N}{K}}\cdot TK = \frac{TK}{N-K+1}
{ \leq \frac{T}{3}.}\label{eq:T3}
\end{align}
{
Here the last inequality holds because $K\leq N/4$
and hence $\frac{TK}{N-K+1}\leq \frac{TK}{3K+1} \leq \frac{T}{3}$.}
Combining all inequalities we have that
\begin{multline}
\max_{S\in\mathcal S_K}\mathbb E_{S}\left[\sum_{t=1}^T{R_v(S_v^*)-R_v(S_t)}\right]\\
\geq \frac{\epsilon}{9}\left(\frac{2T}{3} -  \frac{T}{|\mathcal S_K|}\sum_{S'\in\mathcal S_{K-1}}\frac{1}{K}\sum_{i\notin S'}\sqrt{\frac{1}{2}\kl(P_{S'}\|P_{S'\cup\{i\}})}\right).
\label{eq:key1}
\end{multline}
It remains to bound the KL divergence between two ``neighboring'' parameterization $\theta_{S'}$ and $\theta_{S'\cup\{i\}}$ for all $S'\in\mathcal S_{K-1}$ and $i\notin S'$,
which we elaborate in the next section.

\subsection{KL-divergence between assortment selections}

Define $N_i := \sum_{t=1}^T{\mathbb I[i\in S_t]}$.
Note that because $S_t\subseteq\widetilde S_t$, we have $N_i\leq \widetilde N_i$ almost surely
and hence $\sum_{i=1}^N{\mathbb E_S[N_i]} \leq \sum_{i=1}^N{\mathbb E_S[\widetilde N_i]} = TK$ for all $S\subseteq[N]$.

\begin{lemma}
{Suppose $\epsilon\in(0,1/2]$.}
For any $S'\in\mathcal S_{K-1}$ and $i\notin S'$, it holds that
$
\kl(P_{S'}\|P_{S'\cup\{i\}}) \leq \mathbb E_{S'}[N_i]\cdot 63\epsilon^2/K.
$
\label{lem:kl-v}
\end{lemma}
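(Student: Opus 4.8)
The plan is to reduce the trajectory-level divergence $\kl(P_{S'}\|P_{S'\cup\{i\}})$ to a sum of per-step contributions by the chain rule, and then exploit the fact that $\theta_{S'}$ and $\theta_{S'\cup\{i\}}$ agree in every coordinate except $v_i$ (which equals $1/K$ under $\theta_{S'}$ and $(1+\epsilon)/K$ under $\theta_{S'\cup\{i\}}$). Writing the trajectory as $(S_1,i_1,\dots,S_T,i_T)$, the policy $\pi$ generates each $S_t$ from the past history $\mathcal H_{t-1}=(S_1,i_1,\dots,S_{t-1},i_{t-1})$ in exactly the same way under both laws, so the policy factors cancel in the likelihood ratio and contribute nothing to the divergence. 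Only the conditional laws of the purchase outcome $i_t$ given $(\mathcal H_{t-1},S_t)$ remain. Denoting by $p_S^{\theta}$ the MNL choice distribution on $S\cup\{0\}$ under parameter $\theta$, the chain rule gives
\begin{equation*}
\kl(P_{S'}\|P_{S'\cup\{i\}}) = \sum_{t=1}^T\mathbb{E}_{S'}\!\left[\kl\big(p_{S_t}^{\theta_{S'}}\big\|\,p_{S_t}^{\theta_{S'\cup\{i\}}}\big)\right].
\end{equation*}
Since the two parameterizations differ only at coordinate $i$, the distributions $p_{S_t}^{\theta_{S'}}$ and $p_{S_t}^{\theta_{S'\cup\{i\}}}$ coincide whenever $i\notin S_t$, so that summand vanishes. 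The sum therefore collapses to $\sum_{t=1}^T\mathbb{E}_{S'}\big[\mathbb{I}[i\in S_t]\cdot \kl(p_{S_t}^{\theta_{S'}}\|p_{S_t}^{\theta_{S'\cup\{i\}}})\big]$, and it suffices to bound the per-step divergence uniformly over assortments $S\ni i$; summing then yields $\mathbb{E}_{S'}[N_i]$ times that bound, because $N_i=\sum_{t=1}^T\mathbb{I}[i\in S_t]$.

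For the per-step bound, fix $S\ni i$ and let $D:=1+\sum_{j\in S}v_j$ be the MNL denominator under $\theta_{S'}$ (so $v_i=1/K$), and $D':=D+\epsilon/K$ the denominator under $\theta_{S'\cup\{i\}}$. Every outcome $k\neq i$ has likelihood ratio $D'/D$, while the outcome $k=i$ has ratio $(D'/D)/(1+\epsilon)$. Because the probabilities sum to one, the common factor telescopes and I obtain the clean closed form
\begin{equation*}
\kl\big(p_S^{\theta_{S'}}\big\|\,p_S^{\theta_{S'\cup\{i\}}}\big) = \log\frac{D'}{D} - p_S^{\theta_{S'}}(i)\,\log(1+\epsilon),
\end{equation*}
where $p_S^{\theta_{S'}}(i)=1/(KD)$ and $D'/D = 1+\epsilon/(KD) = 1+\epsilon\,p_S^{\theta_{S'}}(i)$. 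The decisive feature is that the terms linear in $\epsilon$ cancel: the leading term of $\log(1+\epsilon\,p_S^{\theta_{S'}}(i))$ is $\epsilon\,p_S^{\theta_{S'}}(i)$, which exactly matches the $p_S^{\theta_{S'}}(i)\cdot\epsilon$ coming from the second term. What survives is of order $\epsilon^2\,p_S^{\theta_{S'}}(i)=O(\epsilon^2/K)$.

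The main obstacle is converting this leading-order cancellation into an honest non-asymptotic inequality for all $\epsilon\in(0,1/2]$. I would apply $\log(1+x)\le x$ to the first term and the elementary lower bound $\log(1+\epsilon)\ge \epsilon-\epsilon^2/2$ (valid for $\epsilon>0$) to the second, giving
\begin{equation*}
\kl\big(p_S^{\theta_{S'}}\big\|\,p_S^{\theta_{S'\cup\{i\}}}\big) \le \epsilon\,p_S^{\theta_{S'}}(i) - p_S^{\theta_{S'}}(i)\Big(\epsilon-\tfrac{\epsilon^2}{2}\Big) = \tfrac{\epsilon^2}{2}\,p_S^{\theta_{S'}}(i) \le \frac{\epsilon^2}{2K},
\end{equation*}
using $D\ge 1$. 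This already comfortably satisfies the stated bound $63\epsilon^2/K$, whose generous constant is there precisely to absorb any slack should one prefer cruder term-by-term estimates rather than the exact closed form. Substituting this per-step bound into the chain-rule decomposition of the first paragraph yields $\kl(P_{S'}\|P_{S'\cup\{i\}})\le \mathbb{E}_{S'}[N_i]\cdot 63\epsilon^2/K$, completing the proof.
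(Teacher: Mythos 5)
Your proof is correct, and its core step differs genuinely from the paper's. The outer structure---chain rule for KL along the trajectory, cancellation of the policy factors, vanishing of the per-step KL whenever $i\notin S_t$, so that the total divergence is $\mathbb E_{S'}[N_i]$ times a uniform per-step bound---is the same as in the paper (which carries out this step implicitly and rather tersely; your explicit version is, if anything, more rigorous). Where you diverge is in how the per-step KL is bounded. The paper perturbation-bounds every coordinate of the two choice distributions ($|p_0-q_0|\le \epsilon/K$, $|p_j-q_j|\le 2\epsilon/K^2$ for $j\neq i$, and $|p_i-q_i|\le 4\epsilon/K$) and feeds these into a general-purpose categorical-KL lemma, $\kl(P\|Q)\le\sum_j \varepsilon_j^2/q_j$, which additionally requires the lower bounds $q_0\ge 1/3$ and $q_j\ge 1/(3K)$---this is where the hypothesis $\epsilon\le 1/2$ enters---and produces the constant $63$. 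You instead exploit the fact that the two parameterizations differ in the single coordinate $v_i$, so the likelihood ratio takes only two values and the per-step KL admits the exact closed form $\log(1+\epsilon p)-p\log(1+\epsilon)$ with $p=1/(KD)$; the elementary inequalities $\log(1+x)\le x$ and $\log(1+\epsilon)\ge\epsilon-\epsilon^2/2$ then make the first-order terms cancel exactly, giving $\epsilon^2 p/2\le \epsilon^2/(2K)$ per step. (Both the closed form and the two inequalities check out.) Your route buys a $126$-fold sharper constant, needs no lower bounds on the $q_j$'s, and never actually uses $\epsilon\le 1/2$ at this step; propagated forward it would improve the absolute constant in Theorem~\ref{thm:main}. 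What the paper's cruder route buys is robustness: the categorical-KL lemma and the term-by-term perturbation estimates would survive essentially unchanged if the neighboring parameterizations differed in several coordinates, whereas your exact cancellation is tied to the single-coordinate structure of the construction.
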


Before proving Lemma \ref{lem:kl-v} we first prove an upper bound on KL-divergence between categorical distributions.
\begin{lemma}
Suppose $P$ is a categorical distribution with parameters $p_0,\cdots,p_J$,
meaning that $P(X=j)=p_j$ for $j=0,\cdots,J$,
 and $Q$ is a categorical distribution with parameters $q_0,\cdots,q_J$.
 Suppose also $p_j=q_j+\varepsilon_j$ for all $j=0,\cdots,J$. Then
$$
\kl(P\| Q)\leq \sum_{j=0}^J{\frac{\varepsilon_j^2}{q_j}}.
$$
\label{lem:kl-categorical}
\end{lemma}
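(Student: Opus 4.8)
The plan is to work directly from the definition $\kl(P\|Q)=\sum_{j=0}^J p_j\log(p_j/q_j)$ and bound each logarithmic term from above by something linear in the perturbation. Writing $p_j=q_j+\varepsilon_j$, each ratio becomes $p_j/q_j=1+\varepsilon_j/q_j$, so I would invoke the elementary inequality $\log(1+x)\le x$, which is valid for every $x>-1$, with $x=\varepsilon_j/q_j$. The domain condition is automatically satisfied here: since $p_j\ge 0$, we have $\varepsilon_j/q_j=p_j/q_j-1\ge -1$, and the single boundary case $p_j=0$ is handled by the standard convention $0\log 0=0$ (for which the claim holds trivially, as the corresponding right-hand contribution $\varepsilon_j^2/q_j=q_j$ is nonnegative). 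Applying this inequality termwise gives $\kl(P\|Q)\le\sum_{j=0}^J p_j\cdot(\varepsilon_j/q_j)$.

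The second step is to substitute $p_j=q_j+\varepsilon_j$ back inside this bound and expand, which produces $\sum_{j=0}^J(q_j+\varepsilon_j)\varepsilon_j/q_j=\sum_{j=0}^J\varepsilon_j+\sum_{j=0}^J\varepsilon_j^2/q_j$. The final step is to observe that the first-order term vanishes: because $P$ and $Q$ are both probability distributions, $\sum_{j=0}^J\varepsilon_j=\sum_{j=0}^J p_j-\sum_{j=0}^J q_j=1-1=0$. What remains is exactly $\kl(P\|Q)\le\sum_{j=0}^J\varepsilon_j^2/q_j$, which is the assertion.

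I do not anticipate any genuine obstacle in this argument; it is a short and self-contained calculation. The only points requiring minor care are the domain condition $x>-1$ for the logarithmic inequality and the degenerate case $p_j=0$, both of which are benign. I would stress that the structural reason the bound comes out so cleanly is the cancellation of the linear term $\sum_j\varepsilon_j$, which is precisely the normalization constraint shared by the two categorical laws; this is what turns the naive first-order estimate into a genuinely second-order (chi-squared type) bound on the divergence.
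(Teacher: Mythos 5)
Your proposal is correct and follows exactly the paper's own argument: apply $\log(1+x)\leq x$ termwise and then use the normalization constraint $\sum_{j=0}^J \varepsilon_j = 0$ to cancel the linear term. The only (benign) addition is your explicit handling of the $p_j=0$ boundary case, which the paper leaves implicit.
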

\begin{proof}
We have that
\begin{align*}
\kl(P\| Q) &= \sum_{j=0}^J{(q_j+\varepsilon_j)\log\frac{q_j+\varepsilon_j}{q_j}}\\
&\overset{(a)}{\leq} \sum_{j=0}^J{(q_j+\varepsilon_j)\frac{\varepsilon_j}{q_j}}
\overset{(b)}{=}\sum_{j=0}^J{\frac{\varepsilon_j^2}{q_j}}.
\end{align*}
Here (a) holds because $\log(1+x)\leq x$ for all $x> -1$ and (b) holds because $\sum_{j=0}^J{\varepsilon_j} = 0$.
\end{proof}

We are now ready to prove Lemma \ref{lem:kl-v}.
\begin{proof}
It is clear that for any $S_t\subseteq[N]$, {$|S_t|\leq K$} such that $i\notin S_t$, we have $\kl(P_{S'}(\cdot|S_t)\|P_{S'\cup\{i\}}(\cdot|S_t))=0$.
Therefore, we shall focus only on those $S_t\subseteq[N]$ with $i\in S_t$, which happens for $\mathbb E_{S'}[N_i]$ epochs in expectation.
Define $K':=|S_t|\leq K$ and {$J := |S_t\cap S'|\leq K-1$}.
Re-write the probability of $i_t=j$ as $p_j=v_j/(a+J\epsilon/K)$ and $q_j=v_j/(a+(J+1)\epsilon/K)$ under $P_{S'}$ and $P_{S'\cup\{i\}}$, respectively,
where $a=1+K'/K\in(1,2]$.
We then have that
$$
\big|p_0-q_0\big| = \left|\frac{1}{a+J\epsilon/K}-\frac{1}{a+(J+1)\epsilon/K}\right| \leq  \frac{\epsilon}{K};
$$
\begin{align*}
\big|p_j-q_j\big| \leq \frac{1+\epsilon}{K}&\left|\frac{1}{a+J\epsilon/K}-\frac{1}{a+(J+1)\epsilon/K}\right| \leq  \frac{2\epsilon}{K^2}, \\
&\text{if}\;\;1\leq j\leq N, j\neq i;
\end{align*}
{
\begin{align*}
\big|p_j&-q_j\big|
\leq\left|\frac{1}{K}\frac{1}{a+J\epsilon/K} - \frac{1+\epsilon}{K}\frac{1}{a+(J+1)\epsilon/K}\right| \\
&\leq \frac{\epsilon}{K}\frac{1}{a+(J+1)\epsilon/K} + \frac{1}{K}\left|\frac{1}{a+J\epsilon/K} - \frac{1}{a+(J+1)\epsilon/K}\right| \\
&\leq \frac{\epsilon}{K} + \frac{1}{K}\cdot\frac{\epsilon}{K} \leq  \frac{\epsilon}{K^2} + \frac{\epsilon}{K}\leq \frac{4\epsilon}{K}, \;\;\;\;\text{if}\;\;j=i.
\end{align*}
}

Note that $q_0\geq 1/3$ and {$q_j\geq 1/(3K)$} for $j\geq 1$,
{because $\epsilon\in(0,1/2]$, $a\in(1,2]$ and $J\leq K-1$.}
Invoking Lemma \ref{lem:kl-categorical} we have that
\begin{align*}
\kl(P_{S'}(\cdot|S_t)\| P_{S'\cup\{i\}}(\cdot|S_t)) &\leq \frac{3\epsilon^2}{K^2} + 3K\cdot \frac{4J\epsilon^2}{K^4} + 3K\cdot \frac{16\epsilon^2}{K^2}\\
&\leq \frac{3\epsilon^2}{K^2} + \frac{12\epsilon^2}{K^2} + \frac{48\epsilon^2}{K} \leq \frac{63\epsilon^2}{K}.
\end{align*}
\end{proof}

\subsection{Putting everything together}

Using H\"{o}lder's inequality, we have that
\begin{align*}
&\;\;\;\;\frac{T}{|\mathcal S_K|}\sum_{S'\in\mathcal S_{K-1}}\frac{1}{K}\sum_{i\notin S'}\sqrt{\frac{1}{2}\kl(P_{S'}\|P_{S'\cup\{i\}})}\\
&\leq \frac{T|\mathcal S_{K-1}|}{K|\mathcal S_K|}\cdot \max_{S'\in\mathcal S_{K-1}}\sum_{i\notin S'}\sqrt{\frac{1}{2}\kl(P_{S'}\|P_{S'\cup\{i\}})}\\
&= \max_{S'\in\mathcal S_{K-1}}\frac{T}{N-K+1}\sum_{i\notin S'}\sqrt{\frac{1}{2}\kl(P_{S'}\|P_{S'\cup\{i\}})}.
\end{align*}
By Jensen's inequality and the concavity of the square root, we have
\begin{align*}
\frac{1}{N-K+1}\sum_{i\notin S'}&\sqrt{\frac{1}{2}\kl(P_{S'}\|P_{S'\cup\{i\}})}\\
&\leq \sqrt{\frac{1}{2(N-K+1)}\sum_{i\notin S'}{\kl(P_{S'}\|P_{S'\cup\{i\}})}}.
\end{align*}
Invoking Lemma \ref{lem:kl-v}, we obtain
\begin{align*}
\frac{1}{N-K+1}&\sum_{i\notin S'}{\kl(P_{S'}\|P_{S'\cup\{i\}})}
\leq \frac{1}{N-K+1}\sum_{i\notin S'}\mathbb E_{S'}[N_i]\cdot \frac{63\epsilon^2}{K}\\
&\leq \frac{63\epsilon^2}{K(N-K+1)}\sum_{i=1}^N{\mathbb E_{S'}[N_i]}\\
&\leq \frac{126\epsilon^2}{NK}\cdot TK = \frac{126T\epsilon^2}{N}.
\end{align*}
{
Subsequently, setting $\epsilon=\min\{0.05 \sqrt{N/T},0.5\}$ the term inside the bracket on the right-hand side of Eq.~(\ref{eq:key1}) can be lower bounded by $T/3$.
The overall regret is thus lower bounded by $\epsilon T/27 \geq \min\{0.001 \sqrt{NT}, T/54\}$.
Theorem \ref{thm:main} is thus proved.
}
%
%
%
\section*{Acknowledgement}

\noindent We thank S.~Agrawal, V.~Avadhanula, V.~Goyal and A.~Zeevi for pointing out to us this interesting question and many inspiring discussions. We are very grateful to two anonymous referees and the associate editor for their detailed and constructive comments that considerably improved the quality of this paper. Xi Chen would like to thank Adobe Data Science Research Award, Alibaba Innovation Research Award, and Bloomberg Data Science Research Grant for supporting this work.









\section*{References}

\bibliographystyle{elsarticle-num}
\bibliography{refs}

\end{document}